\begin{document}

\title{MuVaC: A Variational Causal Framework for Multimodal Sarcasm Understanding in Dialogues}
\settopmatter{authorsperrow=4}
\author{Diandian Guo}
\orcid{0009-0002-8468-3285}
\affiliation{%
  \institution{Institute of Information Engineering, Chinese Academy of Sciences}
  \institution{School of Cyber Security, University of Chinese Academy of Sciences}
  \state{Beijing}
  \country{China}
}
\email{guodiandian@iie.ac.cn}

\author{Fangfang Yuan}
\affiliation{%
  \institution{Institute of Information Engineering, Chinese Academy of Sciences}
  \institution{School of Cyber Security, University of Chinese Academy of Sciences}
  \state{Beijing}
  \country{China}
}
\email{yuanfangfang@iie.ac.cn}

\author{Cong Cao}
\authornote{Corresponding author.}
\affiliation{%
  \institution{Institute of Information Engineering, Chinese Academy of Sciences}
  \institution{School of Cyber Security, University of Chinese Academy of Sciences}
  \state{Beijing}
  \country{China}
}
\email{caocong@iie.ac.cn}

\author{Xixun Lin}
\affiliation{%
  \institution{Institute of Information Engineering, Chinese Academy of Sciences}
  \state{Beijing}
  \country{China}
}
\email{linxixun@iie.ac.cn}

\author{Chuan Zhou}
\affiliation{%
  \institution{Academy of Mathematics and Systems Science, Chinese Academy of Sciences}
  \institution{School of Cyber Security, University of Chinese Academy of Sciences}
  \state{Beijing}
  \country{China}
}
\email{zhouchuan@amss.ac.cn}

\author{Hao Peng}
\affiliation{%
  \institution{Beihang University	}
  \state{Beijing}
  \country{China}
}
\email{penghao@buaa.edu.cn}

\author{Yanan Cao}
\affiliation{%
  \institution{Institute of Information Engineering, Chinese Academy of Sciences}
  \institution{School of Cyber Security, University of Chinese Academy of Sciences}
  \state{Beijing}
  \country{China}
}
\email{caoyanan@iie.ac.cn}

\author{Yanbing	Liu}
\affiliation{%
  \institution{Institute of Information Engineering, Chinese Academy of Sciences}
  \institution{School of Cyber Security, University of Chinese Academy of Sciences}
  \state{Beijing}
  \country{China}
}
\email{liuyanbing@iie.ac.cn}

\renewcommand{\shortauthors}{Diandian Guo et al.}

\begin{abstract}

The prevalence of sarcasm in multimodal dialogues on the social platforms presents a crucial yet challenging task for understanding the true intent behind online content.
Comprehensive sarcasm analysis requires two key aspects: Multimodal Sarcasm Detection (MSD) and Multimodal Sarcasm Explanation (MuSE).
Intuitively, the act of detection is the result of the reasoning process that explains the sarcasm.
Current research predominantly focuses on addressing either MSD or MuSE as a single task. 
Even though some recent work has attempted to integrate these tasks, their inherent causal dependency is often overlooked.
To bridge this gap, we propose MuVaC, a variational causal inference framework that mimics human cognitive mechanisms for understanding sarcasm, enabling robust multimodal feature learning to jointly optimize MSD and MuSE. 
Specifically, we first model MSD and MuSE from the perspective of structural causal models, establishing variational causal pathways to define the objectives for joint optimization.
Next, we design an alignment-then-fusion approach to integrate multimodal features, providing robust fusion representations for sarcasm detection and explanation generation.
Finally, we enhance the reasoning trustworthiness by ensuring consistency between detection results and explanations.
Experimental results demonstrate the superiority of MuVaC in public datasets, offering a new perspective for understanding multimodal sarcasm.
\end{abstract}

\begin{CCSXML}
<ccs2012>
   <concept>
       <concept_id>10002951.10003227.10003251</concept_id>
       <concept_desc>Information systems~Multimedia information systems</concept_desc>
       <concept_significance>500</concept_significance>
       </concept>
 </ccs2012>
\end{CCSXML}

\ccsdesc[500]{Information systems~Multimedia information systems}

\keywords{sarcasm understanding; variational inference; social media analysis}

\maketitle

\section{Introduction}

Modern social media has transformed the internet into a vast conversational space~\cite{jovanovic2018multimodal}. Public discourse and personal interactions now unfold through dynamic, multimodal dialogues. 
Within these conversational threads, sarcasm is a pervasive and complex phenomenon. 
On video-based social platforms like YouTube and TikTok, sarcastic intent is no longer confined to static images or text. 
Instead, it is dynamically constructed through a complex interplay of dialogue history, vocal tone, facial expressions, and the shared context of a video.
Accurately understanding this nuanced, context-sensitive sarcasm is therefore critical for enhancing web applications, such as sentiment analysis \cite{maynard2014cares,farias2017irony}, public opinion mining \cite{cai2019multi} and the development of sophisticated web agents.

\begin{figure}[t]
    \centering
    \includegraphics[width=1\linewidth]{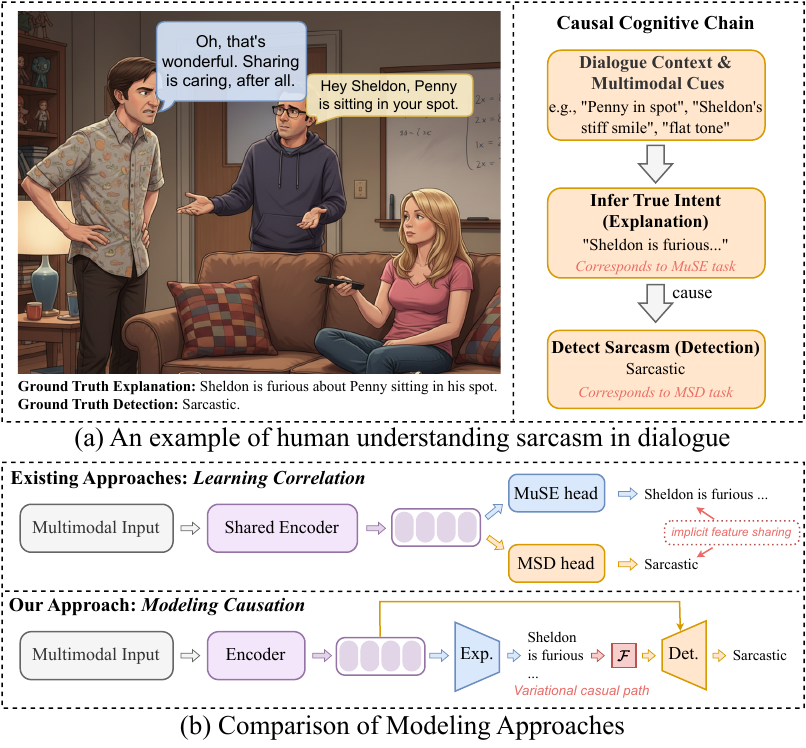}
    \caption{An illustration of the causal chain in human sarcasm understanding, contrasted with different paradigms.}%
    \label{fig:mot}
\end{figure}

Early research focuses primarily on identifying sarcasm in text and establishes a basic detection framework through lexical feature analysis and semantic pattern mining \cite{davidov2010semi, xiong2019sarcasm}. 
With the growth of multimodal content, sarcasm detection in dialogue requires a comprehensive analysis of multimodal factors such as textual content, gestures, and vocal tones, presenting significant technical challenges. 
This evolution has solidified two core research pillars for a comprehensive understanding of online sarcasm: Multimodal Sarcasm Detection (MSD), which aims to identify whether an utterance is sarcastic~\cite{Tomar2023YourTS,castro-etal-2019-towards,pandey2024vyang}, and Multimodal Sarcasm Explanation (MuSE)~\cite{desai2022nice,kumar2022did,kumar2023explaining}, which seeks to generate natural language to articulate the sarcastic intent.
While some studies~\cite{singh2024well,chen2024cofipara} attempt to unify these tasks via multi-task learning, they typically treat MSD and MuSE as parallel objectives with only implicit feature sharing.

However, these approaches overlook a fundamental insight: the relationship between sarcasm explanation and detection is not merely correlational but deeply causal~\cite{lin2025llm}.
From a cognitive science perspective~\cite{gibbs2007irony}, human sarcasm comprehension follows a distinct causal cognitive chain. 
As shown in Figure \ref{fig:mot}(a), we first integrate the conflicting cues to infer the speaker's true, non-literal intent (the \textit{explanation}).
It is only after identifying a sharp incongruity between this inferred intent and the literal meaning of the utterance that we subsequently identify it as sarcasm (the \textit{detection}).
In essence, identifying that a statement is sarcastic (MSD) is the result of understanding why it is sarcastic (MuSE).
However, this crucial cognitive process remains underexplored in existing methods.
As depicted in Figure~\ref{fig:mot}(b), current approaches learn to associate surface-level cues with a sarcasm label, but they lack the underlying reasoning of why those cues lead to a sarcastic interpretation. 
This reliance on "what is associated" rather than "what causes what" often leads to a lack of robustness and poor generalization on diverse web content.
Therefore, to build more intelligent and reliable systems, there is a clear need to shift the paradigm from correlation-based learning to explicitly causal modeling.

To address these challenges, we are inspired by the cognitive mechanism of human understanding of sarcasm and propose a \underline{\textbf{Mu}}ltimodal sarcasm understanding framework based on \underline{\textbf{Va}}riational \underline{\textbf{C}}ausal inference (MuVaC). 
Our core idea is to emulate the human cognitive process by establishing a causal pathway from explanation to detection. 
However, due to the unavailability of ground truth explanations during inference, modeling a direct causal path between MuSE and MSD would introduce biases, which contradicts our objectives. 
Therefore, we reformulate the structural causal model (SCM) into a deep variational inference framework with latent variables by incorporating hidden features to enable joint optimization, while ensuring causal relevance and consistency between detection and explanation.
To achieve controllable sarcasm explanations, we introduce probabilistic causal intervention during training to guide the generation of accurate explanations.
To foster robust feature representation for causal inference, we emphasize expressions and postures as crucial supplementary features during feature extraction.
For multimodal fusion, we propose an alignment-then-fusion approach to integrate multimodal features, providing a robust fused representation for causal inference.

Our contributions are as follows:

    $\bullet$  To the best of our knowledge, we are the first to formulate multimodal sarcasm understanding as a variational causal inference problem. 
    This paradigm shifts the objective from learning task correlations to modeling the cognitive causal chain, better aligning computational models with human reasoning.%
    
    $\bullet$  We propose MuVaC, an end-to-end framework that jointly optimizes MSD and MuSE through a unified causal architecture. MuVaC introduces a novel align-then-fusion module to learn robust multimodal representations for effective causal inference.%
    
    $\bullet$  %
    Experiments demonstrate that MuVaC not only achieves SOTA on both MSD and MuSE tasks, but also offers a new, causally-grounded perspective for multimodal sarcasm understanding. MuVaC achieves a significant F1-score improvement of nearly 10\% on the MUSTARD++ dataset and surpasses all strong baselines.

\begin{figure*}[t]
    \centering
    \includegraphics[width=1\linewidth]{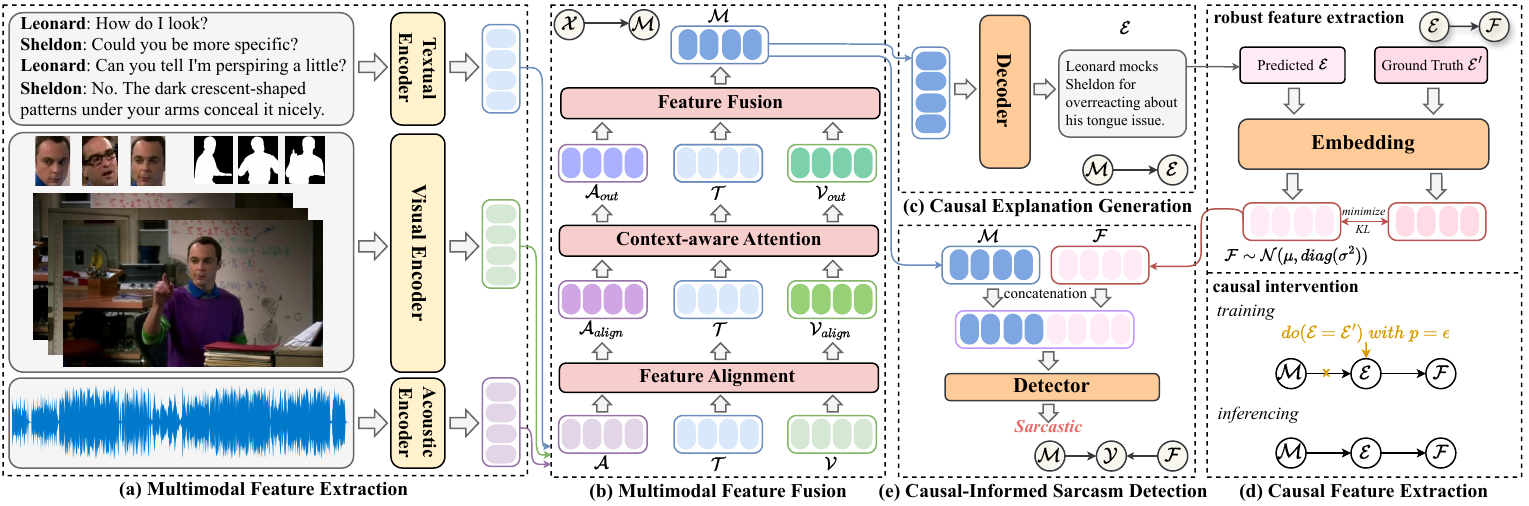}
    \caption{The overall architecture of MuVaC comprises five key steps: (a) Multimodal Feature Extraction, (b) Multimodal Feature Fusion,  (c) Causal Explanation Generation, (d) Causal Feature Extraction, and (e) Causal-Informed Sarcasm Detection. }
    \label{fig:model}
\end{figure*}

\section{Related Work}
\quad \textbf{\textit{Multimodal Sarcasm Detection.}}
Multimodal sarcasm detection refers to the process of identifying implicit sarcasm by analyzing multimodal contexts. 
\citet{castro-etal-2019-towards} developed the first multimodal conversational sarcasm detection benchmark dataset MUStARD. %
MO-Sarcation \citep{Tomar2023YourTS} verified the significant effect of the modal fusion order on the MSD task.
VyAnG-Net \citep{pandey2024vyang} extracted cross-modal discriminative features with adaptive ConvNets. 
MV-BART \citep{zhuang2024mv} dynamically adjusted the weights of multigranularity cues for different sarcastic scenarios.
Recent studies explored multitask learning frameworks to jointly optimize sarcasm detection with sentiment analysis \cite{chauhan-etal-2020-sentiment,bhosale2023sarcasm} and humor recognition \cite{Hasan_Lee_Rahman_Zadeh_Mihalcea_Morency_Hoque_2022}, offering insights for improving semantic understanding.
However, existing methods rely on simple combinations that overlook cross-task semantic correlations.
We address this by causally integrating MSD and MuSE, improving both credibility and interpretability.

\textbf{\textit{Multimodal Sarcasm Explanation.}}
Multimodal sarcasm explanation, as a derivative task of MSD, was first systematically defined by \citet{desai2022nice}.
\citet{kumar2022did} explored sarcasm analysis in dialogue scenarios and constructed the WITS dataset. 
In addition, they proposed MAF, using a new attention-based modal fusion mechanism as a solid baseline. 
TEAM \citep{jing2023multi} incorporated external common sense to enhance the generation of sarcasm explanations. 
SIRG \citep{singh2024well} introduced an innovative multimodal shared fusion method that seamlessly integrates cross-modal features through cross-attention mechanisms.
Although our approach aligns with the main MuSE paradigms, it uniquely integrates sarcasm explanation as a core component of the causal model to enhance sarcasm detection.

\textbf{\textit{Causal Inference.}}
Causal inference aims to explore the causal relationship between the observed variables rather than the superficial correlation by constructing SCMs~\cite{lin2025generative}.
Leveraging its capabilities in counterfactual reasoning and unbiased estimation, causal inference has been increasingly applied to various tasks such as computer vision \citep{yang2023context} and natural language processing \citep{wang2024debiased, xia2024aligning}.
Recently, causal inference has also been introduced into multimodal tasks. 
\citet{Agarwal_Shetty_Fritz_2020} propose automatic semantic image manipulation to alleviate spurious correlations during learning.
\citet{sun2022counterfactual} reveal harmful associations of text semantics in multimodal sentiment analysis through counterfactual manipulation.
\citet{Yang_Zhang_Qi_Cai_2021} propose a causal attention mechanism to reduce the confusion bias that misleads the attention module.
Different from existing works that focus on removing biased dependencies, we propose to reconstruct the neglected causal correlations between MuSE and MSD by transforming the SCM into deep variational inference.

\begin{figure}[t]
    \centering
    \includegraphics[width=\linewidth]{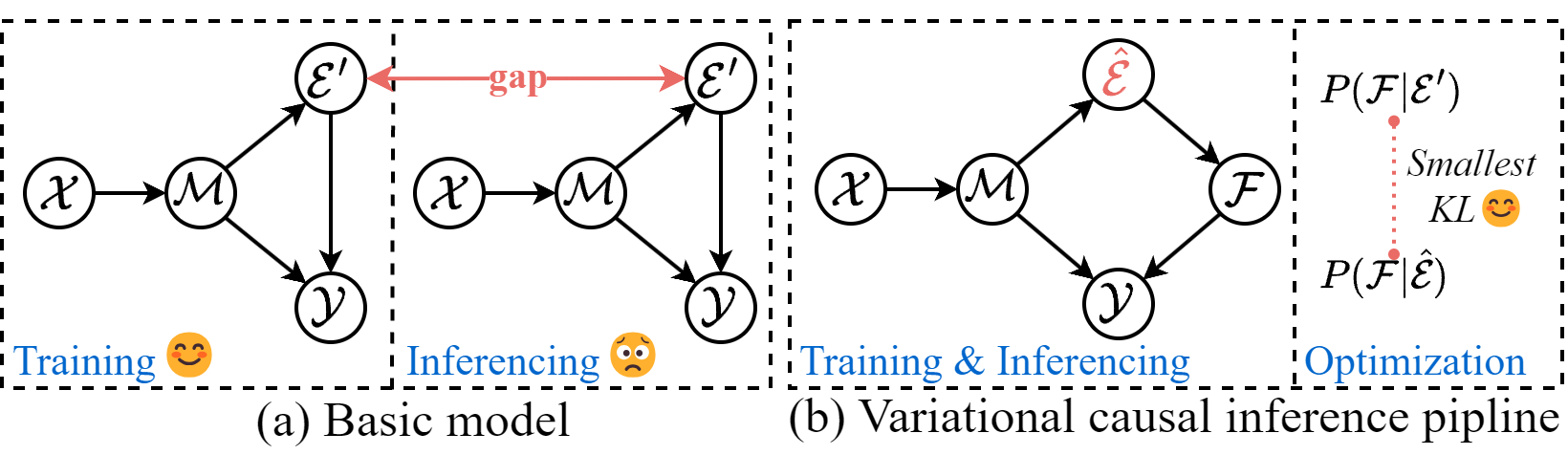}
    \caption{Structural causal models for MSD and MuSE.}
    \label{fig:vci}
\end{figure}

\section{Methodology}
The overall architecture of MuVaC is shown in Figure \ref{fig:model}. 
This section is organized as follows. We first formally define the joint task in Section \ref{3.1}. 
Following this, we introduce the theoretical foundation of our work: the variational causal framework in Section \ref{3.2}. 
We then proceed to detail the implementation of each component of the MuVaC architecture Section \ref{3.3}-\ref{3.8}, from multimodal feature processing to the final causal-informed detection.

\subsection{Task Definition}
\label{3.1}
The standard MSD task aims to detect whether an input $\mathcal{X} = (\mathcal{T}, \mathcal{V}, \mathcal{A})$ contains sarcasm and outputs $\mathcal{Y} \in \{0,1\}$. Here, $\mathcal{T}$, $\mathcal{V}$, and $\mathcal{A}$ denote the text, visual, and audio modalities, respectively.
Given an input $\mathcal{X}$, the traditional MuSE task is to generate an explanation $\mathcal{E}_{sar}$ to explain why $\mathcal{X}$ embodies sarcastic information. 
The task we aim to undertake involves not only producing a sarcasm detection result $\mathcal{Y}$ for $\mathcal{X}$ but also providing the corresponding explanation $\mathcal{E}$. 
This can be formalized as learning a model $g:  (\mathcal{T},\mathcal{V},\mathcal{A}) \rightarrow (\mathcal{Y},\mathcal{E})$.

\subsection{Variational Causal View for MuSE and MSD}
\label{3.2}

We study this problem from the perspective of SCM.
To establish and maintain consistency between explanation $\mathcal{E}$ and prediction $\mathcal{Y}$, a basic causal framework is illustrated in Figure \ref{fig:vci}(a). 
The optimization objective is $P(\mathcal{E}=\mathcal{E}', \mathcal{Y}=\mathcal{Y}' | \mathcal{M})$, where $\mathcal{E}'$ and $\mathcal{Y}'$ denote the ground-truth sarcasm explanation and label, respectively.
$\mathcal{M}$ is the multimodal feature.
During training, we can directly optimize the joint probability distribution using ground-truth explanations $\mathcal{E}'$:
$P(\mathcal{E}', \mathcal{Y}=\mathcal{Y}' | \mathcal{M}) = P(\mathcal{E}' | \mathcal{M}) P(\mathcal{Y}=\mathcal{Y}' | \mathcal{M}, \mathcal{E}')$. 
However, during inference, the basic model faces two critical challenges:
(1) It cannot access the ground truth explanation $\mathcal{E}'$; and (2) optimizing $P(\mathcal{Y}=\mathcal{Y}' | \mathcal{M}, \mathcal{E}=\mathcal{E}')$ becomes infeasible, as it is empirically difficult to generate an explanation that is entirely consistent with $\mathcal{E}'$.
Therefore, in this model, we can only use the explanation $\hat{\mathcal{E}}$ with the highest generative probability to approximate $P(\mathcal{Y}=\mathcal{Y}' | \mathcal{M},\hat{\mathcal{E}})$ during inference, where $\hat{\mathcal{E}}=\mathop{\arg\max}\limits_{\mathcal{E}} \; P(\mathcal{E}|\mathcal{M})$.

To mitigate the discrepancy between the ground-truth explanation $\mathcal{E}'$ and the generated explanation
$\hat{\mathcal{E}}$, we propose using a variational causal inference framework.
As shown in Figure \ref{fig:vci}(b), we introduce a front-door path $\mathcal{E}\rightarrow\mathcal{F}\rightarrow\mathcal{Y}$ compared to the base model in Figure \ref{fig:vci}(a). 
We define $\mathcal{F}\sim\mathcal{N}(\boldsymbol{\mu},diag(\boldsymbol{\sigma}^2))$ as the latent variable.
This allows us to quantify the explanation divergence between $P(\mathcal{Y}|\mathcal{M},\mathcal{E}')$ and $P(\mathcal{Y}|\mathcal{M},\hat{\mathcal{E}})$ through the KL-divergence $KL\big( P(\mathcal{F}|\mathcal{E}')  || P(\mathcal{F}|\hat{\mathcal{E}}) \big)$.
Similarly to the variational autoencoder, our objective is to minimize the distributional discrepancy between our model $p$ and the basic model $q$ by optimizing the Evidence Lower Bound (ELBO). 
This approach allows us to indirectly optimize the objective $\log p(\mathcal{Y}'|\mathcal{M})$:
\begin{equation}
\resizebox{0.9\linewidth}{!}{$
\begin{aligned}
\mathcal{L}_{cls}&=\log p(\mathcal{Y}'|\mathcal{M}) \\
&\geq 
\begin{aligned}
\mathbb{E}_{q(\mathcal{F}|\mathcal{M})}[\log p(\mathcal{Y}'|\mathcal{M},\mathcal{F})+\log p(\mathcal{F}|\mathcal{M})-\log q(\mathcal{F}|\mathcal{M})]\end{aligned} \\
&= 
\begin{aligned}
\mathbb{E}_{q(\mathcal{F}|\mathcal{M})}[\operatorname{log}p(\mathcal{Y}'|\mathcal{M},\mathcal{F})]-KL(q(\mathcal{F}|\mathcal{M})\parallel p(\mathcal{F}|\mathcal{M})),
\end{aligned} \\
\end{aligned}
$}
\end{equation}
where $q(\mathcal{F}|\mathcal{M})=\sum_\mathcal{E} q(\mathcal{F}|\mathcal{E})q(\mathcal{E}|\mathcal{M})=q(\mathcal{F}|\mathcal{E}')$, since $q(\mathcal{E}|\mathcal{M})$ is a Dirac delta distribution satisfying $q(\mathcal{E}'|\mathcal{M}) = 1$. However, $p(\mathcal{F}|\mathcal{M})= \sum_\mathcal{E} p(\mathcal{F}|\mathcal{E})p(\mathcal{E}|\mathcal{M})$ is difficult to compute. We use the generated explanation $\hat{\mathcal{E}}$ to approximate $p(\mathcal{F}|\mathcal{M})$, so the loss function can be expressed as:
\begin{equation}
\resizebox{0.9\linewidth}{!}{$
\mathcal{L}_{cls} = \underbrace{-\mathbb{E}_{q(\mathcal{F}|\mathcal{E}')}[\operatorname{log}p(\mathcal{Y}'|\mathcal{M},\mathcal{F})]}_{\text{reconstruction loss}}+\underbrace{KL(q(\mathcal{F}|\mathcal{E'})\parallel p(\mathcal{F}|\hat{\mathcal{E}}))}_{\text{latent divergence}},
$}\label{eq:clo}
\end{equation}
where the classification loss $\mathcal{L}_{cls}$ is composed of two critical terms. 
The first term, $ -\mathbb{E}_{q(\mathcal{F}|\mathcal{E}')}[\operatorname{log}p(\mathcal{Y}'|\mathcal{M},\mathcal{F})] $, is the standard cross-entropy loss for the detection task, which encourages the model to predict the correct sarcasm label based on both the multimodal features $\mathcal{M}$ and the explanation features $\mathcal{F}$.
The second term,  $KL(q(\mathcal{F}|\mathcal{E'})\parallel p(\mathcal{F}|\hat{\mathcal{E}}))$, is the cornerstone of our causal modeling. 
It acts as a regularizer, forcing the distribution of features $p(\mathcal{F}|\hat{\mathcal{E}})$ derived from the model-generated explanation $\hat{\mathcal{E}}$ to be as close as possible to the distribution of features $q(\mathcal{F}|\mathcal{E'})$ derived from the ground-truth explanation $\mathcal{E}'$. 
By minimizing this divergence, we ensure that the explanations our model generates are not just superficially plausible, but also contain the same essential causal information as the ground-truth explanations, thus making the subsequent detection process more robust and causally grounded.

Simultaneously, our other goal is to maximize the generation~probability $p(\mathcal{E}'|\mathcal{M} )$ of the ground-truth explanation $\mathcal{E}'$ by optimizing:
\begin{equation}
\mathcal{L}_{exp}=-\log p (\mathcal{E}'|\mathcal{M}).\label{eq:elo}
\end{equation}
A more detailed derivation can be found in the Appendix.

\subsection{Multimodal Feature Extraction}
\label{3.3}

This module extracts discriminative cross-modal features for sarcasm detection and explanation. 
Each modality is processed with tailored strategies to capture sarcastic cues.

\textit{\textbf{Textual Features.}}
We employ a pre-trained BART \cite{lewis2019bart} model as both the text encoder and the model backbone. 
By integrating BERT’s bidirectional contextual understanding with GPT’s autoregressive generation capability, BART effectively supports both the sarcasm detection task and explanation generation tasks.
For text input $\mathbf{T} = \{t_1,t_2,...,t_n\}$, BART extracts text features as $\mathcal{T}\in \mathbb{R}^{n\times d}$, where $n$ denotes the sequence length and $d$ represents the hidden layer dimension.

\textit{\textbf{Visual Features.}}
We take advantage of a pre-trained CLIP \cite{radford2021learning} model to extract visual features.
To address redundant static frames in videos and capture nonverbal cues related to sarcasm, we first perform action-aware keyframe sampling based on inter-frame similarity, reducing the temporal redundancy of the video. 
Then, we extract facial and pose features as auxiliary visual features.
The final visual representation is expressed as $\mathcal{V} = [\mathcal{V}_h;\mathcal{V}_f;\mathcal{V}_p] \in \mathbb{R}^{v\times d}$, where $\mathcal{V}_h$, $\mathcal{V}_f$, and $\mathcal{V}_p$ denote holistic, facial, and pose characteristics, respectively. $v$ is the sampled frame of $\mathcal{V}$.

\textit{\textbf{Acoustic Features.}}
For audio input $\mathbf{A}$, we sample the audio at a rate of 16 kHz, truncate the segments to equal length and extract the audio features using a pre-trained CLAP \cite{wu2023large} as $\mathcal{A} \in \mathbb{R}^{1\times d}$.

\subsection{Multimodal Feature Fusion}
\label{3.4}

The multimodal feature fusion corresponds to the path $\mathcal{X} \rightarrow \mathcal{M}$ in our SCM. 
Since MuSE tasks require natural language output, we employ text as the primary modality while using audio and vision as supplementary information. 
To ensure robust feature representation for both explanation generation and sarcasm detection, we perform multimodal feature fusion using adapter-like align-then-fusion (ATF) layers inserted into BART's encoder. 

\textit{\textbf{Feature Alignment.}}
Given multimodal features $\mathcal{X}=\{\mathcal{T}, \mathcal{A}, \mathcal{V}\}$, we first align visual and acoustic features with text features through cross-attention layers to project them into a shared vector space:
\begin{equation}
 \mathcal{V}_{align}= \mathrm{CA}(\mathcal{V},\mathcal{T}),\;%
\mathcal{A}_{align}= \mathrm{CA}(\mathcal{A},\mathcal{T}),%
\end{equation}
where $\mathrm{CA(\cdot,\cdot)}$ is a cross-attention layer. 

\textit{\textbf{Context-aware Attention.}}
We then guide aligned visual and acoustic features $\mathcal{V}_{align}$ and $\mathcal{A}_{align}$ as contextual information through context-aware self-attention layers to obtain $\mathcal{V}_{con}$ and $\mathcal{A}_{con}$:
\begin{gather}
\begin{bmatrix}
\hat{K} \\
\hat{V}
\end{bmatrix}=\left(1-
\begin{bmatrix}
\lambda_k \\
\lambda_v
\end{bmatrix}\right)
\begin{bmatrix}
K \\
V
\end{bmatrix}+\mathcal{C}
\begin{bmatrix}
\lambda_k \\
\lambda_v
\end{bmatrix}
\begin{bmatrix}
U_k \\
U_v
\end{bmatrix}, \label{eq:1}\\
\begin{bmatrix}
\lambda_k \\
\lambda_v
\end{bmatrix}=\sigma\left(
\begin{bmatrix}
K \\
V
\end{bmatrix}
\begin{bmatrix}
W_{k_1} \\
W_{v_1}
\end{bmatrix}+\mathcal{C}
\begin{bmatrix}
U_k \\
U_v
\end{bmatrix}
\begin{bmatrix}
W_{k_2} \\
W_{v_2}
\end{bmatrix}\right),\label{eq:2}
\\
\mathcal{V}_{con} = \mathrm{ContextSA}(\mathcal{T},\mathcal{V}_{align}),
\\
\mathcal{A}_{con} = \mathrm{ContextSA}(\mathcal{T},\mathcal{A}_{align}),
\end{gather}
where $\{U_k,U_v\}\in\mathbb R^{d_c\times d}$ and $\{W_{k_1},W_{k_2},W_{v_1},W_{v_2}\}$ $\in\mathbb R^{d\times 1}$ are learnable parameters. $\mathcal{C}$ is contextual information. $\mathrm{ContextSA}(A,B)$ denotes computing self-attention using $A$ as Q/K/V while incorporating $B$ as contextual information through Eqs. \eqref{eq:1}, \eqref{eq:2}.

\textit{\textbf{Reciprocal Contextual Augmentation.}} Although the current features have achieved sufficient cross-modal interaction through text-centric contextualization, the direct interplay between visual and acoustic modalities remains underexplored, as both are merely appended as auxiliary contexts to the text representations. 
To address the lack of cross-modal synergy, we introduce reciprocal contextual augmentation. 
Specifically, we alternately inject the complementary modality %
as dynamic contexts in cross-attention layers to achieve deeper fusion:
\begin{gather}
\mathcal{V}_{out} = \mathrm{ContextCA}(\mathcal{T},\mathcal{V}_{con},\mathcal{A}_{con}),
\\
\mathcal{A}_{out} = \mathrm{ContextCA}(\mathcal{T},\mathcal{A}_{con},\mathcal{V}_{con}),
\end{gather}
where $\mathrm{ContextCA}(A,B,C)$ computes cross-attention using $A$ as Q, $B$ as K/V, and $C$ as context via Eqs. \eqref{eq:1}, \eqref{eq:2}. 

\textit{\textbf{Feature Fusion.}}
Finally, the obtained features are fused by a gated linear layer:
\begin{gather}
w_v =\sigma([\mathcal{T},\mathcal{V}_{out}]W_v+b_v), \\ w_a = \sigma([\mathcal{T},\mathcal{A}_{out}]W_a+b_a),\\
\mathcal{M}=\mathcal{T}+w_v\odot\mathcal{V}_{out}+w_a\odot\mathcal{A}_{out}.
\end{gather}
where $\{W_{v},W_{a},b_{v},b_{a}\}$ are learnable parameters.

\subsection{Causal Explanation Generation}
\label{3.5}

For the causal explanation generation pathway $\mathcal{M} \rightarrow \mathcal{E}$, we take advantage of the BART decoder due to its inherent suitability for conditional text generation. 
Given the multi-modal fused features $\mathcal{M}$, the explanation $\mathcal{E}$ is directly generated through:
\begin{equation}
\mathcal{E}=\mathrm{Decoder}(\mathcal{M}).
\end{equation}
Following Eq. \eqref{eq:elo}, the explanation generation loss is formulated as:
\begin{equation}
\mathcal{L}_{exp}=-\frac{1}{N}\sum_{i=1}^N e'_i \log e_i, \label{eq:exploss}
\end{equation}
where $N$ denotes the total tokens in the ground-truth $\mathcal{E}'$, $e'_i \in \mathcal{E}'$ represents the $i$-th ground-truth explanation token, and $e_i \in \mathcal{E}$ corresponds to the predicted token probability.

\begin{table*}[t]
\centering

\caption{Main Results on MUStARD dataset. * indicates our reproduced results. The p-value of the significance test between \textbf{BOLD} result of MuVaC and the corresponding result of VyAnG-Net is less than 0.01.}
\label{tab:main}

\begin{tabular}{clcccccccc}
\Xhline{1.1pt}
\multirow{2}{*}{\textbf{Modality}} & \multirow{2}{*}{\textbf{Method}}                                                      & \multicolumn{4}{c}{\textbf{Speaker Dependent}} & \multicolumn{4}{c}{\textbf{Speaker Independent}} \\ \cline{3-10} 
                                   &                                                                                       & Acc.(\%)   & Pre.(\%)   & Rec.(\%)   & F1(\%)  & Acc.(\%)    & Pre.(\%)   & Rec.(\%)   & F1(\%)   \\ \hline
\multirow{3}{*}{T}                 & BERT (\citeyear{Devlin_Chang_Lee_Toutanova_2019})                     & 64.3       & 65.6       & 64.3       & 64.7    & 58.0        & 58.2       & 56.7       & 57.4     \\
                                   & SMSD (\citeyear{10.1145/3308558.3313735})               & -          & 61.6       & 61.0       & 61.1    & -           & 51.7       & 48.2       & 47.0     \\ 
                                   & MIARN (\citeyear{tay-etal-2018-reasoning})     & -          & 64.7       & 64.0       & 63.9    & -           & 60.4       & 55.2       & 54.0     \\ \hline
T+V                                & FiLM  (\citeyear{10.1007/978-3-030-92307-5_21})                                                                               & 67.0       & 67.3       & 66.2       & 66.7    & 60.6        & 60.8       & 59.4       & 60.1     \\ \hline
\multirow{9}{*}{T+V+A}             & SVM (\citeyear{castro-etal-2019-towards})   & -          & 72.6       & 71.6       & 71.6    & -           & 64.3       & 62.6       & 62.8     \\
                                   & A-MTL (\citeyear{chauhan-etal-2020-sentiment})                            & -          & 73.4       & 72.8       & 72.6    & -           & 69.5       & 66.0       & 65.9     \\
                                   & QPM (\citeyear{Liu_Zhang_Li_Wang_Song_2021})                        & 77.5       & 77.5       & 77.6       & 77.5    & 66.2        & -          & -          & 66.1     \\
                                   & IWAN (\citeyear{Wu2021ModelingIB})                                       & -          & 75.2       & 75.2       & 75.1    & -           & 71.9       & 71.3       & 70.0     \\
                                   & HKT (\citeyear{Hasan_Lee_Rahman_Zadeh_Mihalcea_Morency_Hoque_2022}) &     73.6      &   73.6          & 73.6         &  73.6       &    61.6         &     63.9       &  61.6          &    61.7      \\
                                   & MO-Sarcation* (\citeyear{Tomar2023YourTS})                                  &  79.7          &   79.7         &  79.7         &  79.7       &  71.3       &    65.1        &  71.1     &  67.9        \\
                                   & VyAnG-Net  (\citeyear{pandey2024vyang})                                  & 79.9       & 78.8       & 78.2       & 78.5    & 76.9        & \textbf{75.7}       & 75.5       & \textbf{75.6}     \\
                                   & CESDN (\citeyear{li2024attention})                                       & 75.5       & 76.2       & 74.2       & 75.2    & 71.3        & 71.9       & 69.9       & 70.9     \\

                                   &MV-BART (\citeyear{zhuang2024mv}) &-&80.6 &82.8 &81.7&-&-&-&-\\
                                   &CCG-Net (\citeyear{zhuang2025cross})&-&79.1&79.1& 79.0&-&-&-&-\\ \cline{2-10}
                                   & \textbf{MuVaC (ours) }                                  
                                             &    \textbf{86.9}$^{\uparrow 7.0} $&\textbf{86.8}$^{\uparrow 6.2} $         &\textbf{89.2}$^{\uparrow 6.4} $            & \textbf{88.0}$^{\uparrow 6.3} $       &\textbf{77.4}$^{\uparrow 0.5} $             & 70.8           & \textbf{79.0}$^{\uparrow 3.5} $           & 74.7          \\ 
\Xhline{1.1pt}
\end{tabular}
\end{table*}

\subsection{Causal Feature Extraction}
\label{3.6}

Before sarcasm detection, it is necessary to obtain a robust causal feature $\mathcal{F}$ through the $\mathcal{E} \rightarrow \mathcal{F}$ path. 
We use a bidirectional Transformer to extract preliminary features from the explanation text:
\begin{equation}
 \mathcal{F}=\mathrm{BiTrans}(emb(\mathcal{E})),
\end{equation}
where $emb(\cdot)$ denotes the token embedding layer. 
To model robust explanation features, we assume that $\mathcal{F}$ follows a normal distribution $\mathcal{N}(\boldsymbol{\mu},diag(\boldsymbol{\sigma}^2))$. 
The mean vector $\boldsymbol{\mu}$ and variance $\boldsymbol{\sigma}^2$ %
are predicted via MLPs using the \verb|[CLS]| token feature $\mathcal{F}$:
\begin{equation}
\boldsymbol{\mu} =  \mathrm{MLP}_\mu(\mathcal{F}_{cls}), \; \log \boldsymbol{\sigma}^2=\mathrm{MLP}_\sigma(\mathcal{F}_{cls}),
\end{equation}
which ensures differentiability for optimizing Eq. \eqref{eq:clo}. %
To enforce consistency along the $\mathcal{E} \rightarrow \mathcal{F}$ pathway, both the training and testing phases use the generated explanation $\hat{\mathcal{E}}$. 
The training process is regularized by the KL divergence terms in Eq. \eqref{eq:clo} and Eq. \eqref{eq:elo}, which constrain explanation generation.
However, similar to humans who may either fail to recognize sarcasm without explicit cues or overgeneralize sarcasm to all dialogues, we introduce a small amount of causal intervention during training to mitigate such biases.
We intervene on $\mathcal{E}$ by probabilistically masking the $\mathcal{M} \rightarrow \mathcal{E}$ path. 
Specifically, we replace generated explanations $\hat{\mathcal{E}}$ with ground-truth $\mathcal{E}'$ for masked samples during training:
\begin{equation}
   \mathcal{F}=
\begin{cases}
\mathrm{BiTrans}(emb(\hat{\mathcal{E}})),  &\text{with }p=1-\varepsilon,    \\
\mathrm{BiTrans}(emb(\mathcal{E}')), &\text{with }p=\varepsilon,
\end{cases}
\end{equation}
where $\varepsilon$ is the hyperparameter. 
The basic model, which completes the $\mathcal{E} \rightarrow \mathcal{F}$ path using ground truth $\mathcal{E}'$, is consistent with this approach when we intervene on all samples~($\varepsilon=1$).

\subsection{Causal-Informed Sarcasm Detection}
\label{3.7}

After obtaining $\mathcal{F}$, we can complete the final $\mathcal{F}\rightarrow \mathcal{Y} \leftarrow \mathcal{M}$ path to obtain the result of sarcasm detection. 
We use the explanation feature $\mathcal{F}$ and the multimodal feature $\mathcal{M}$ to calculate the result:
\begin{equation}
\begin{aligned}
\hat{\mathcal{Y}}&=%
\mathrm{softmax}([\mathbb{E}(\mathcal{F});\mathcal{\bar{M}}]W_y+b_y),\\
&=\mathrm{softmax}([\boldsymbol{\mu};\mathcal{\bar{M}}]W_y+b_y),
\end{aligned}
\end{equation}
where $\mathcal{\bar{M}}$ is the average result of the pooling of $\mathcal{M}$.

\subsection{Optimization}
\label{3.8}

\label{sec:op}
The training objective comprises two components: $\mathcal{L}_{cls}$ and  $\mathcal{L}_{exp}$. 
$\mathcal{L}_{exp}$ is formalized in Eq. \eqref{eq:exploss}, $\mathcal{L}_{cls}$ is defined as:
\begin{equation}
\resizebox{0.9\linewidth}{!}{$
\begin{aligned}
\mathcal{L}_{cls}&=-\mathbb{E}_{q(\mathcal{F}|\mathcal{E}')}[\operatorname{log}p(\mathcal{Y}'|\mathcal{M},\mathcal{F})]+KL(q(\mathcal{F}|\mathcal{E'})\parallel p(\mathcal{F}|\hat{\mathcal{E}}))\\
&= -\mathbb{E}_{q(\mathcal{F}|\mathcal{E}')}[\operatorname{log}p(\mathcal{Y}'|\mathcal{M},\mathcal{F})]\\ 
&+\frac{1}{2}\left[tr\{\boldsymbol{\Sigma}_{\hat{\mathcal{E}}}^{-1}\boldsymbol{\Sigma}_{\mathcal{E}^{\prime}}\}+\Delta\boldsymbol{\mu}^\top\boldsymbol{\Sigma}_{\hat{\mathcal{E}}}^{-1}\Delta\boldsymbol{\mu}-d_{\mathcal{F}}+\log\frac{|\boldsymbol{\Sigma}_{\hat{\mathcal{E}}}|}{|\boldsymbol{\Sigma}_{\mathcal{E}^{\prime}}|}\right],
\end{aligned}
$}
\end{equation}
where the KL divergence term is the analytical solution under the normal distribution. $d_{\mathcal{F}}$ denotes the feature dimension of $\mathcal{F}$, $\Delta\boldsymbol{\mu}=(\boldsymbol{\mu}_{\hat{\mathcal{E}}}-\boldsymbol{\mu}_{\mathcal{E}'})$ and $\boldsymbol{\Sigma}=diag(\boldsymbol{\sigma}^2)$.
Directly computing the expectation $\mathbb{E}_{q(\mathcal{F}|\mathcal{E}')}$ is intractable, we adopt the Monte Carlo estimation following Xue et al. (\citeyear{xue2023variational}):
\begin{equation}
\resizebox{0.9\linewidth}{!}{$
-\mathbb{E}_{q(\mathcal{F}|\mathcal{E}')}[\operatorname{log}p(\mathcal{Y}'|\mathcal{M},\mathcal{F})]\approx -\frac{1}{H}\sum_{i=1}^H\mathcal{Y}'\log p(\mathcal{E}|\mathcal{M},\mathcal{F}_i),$}
\end{equation}
where $\mathcal{F}_i \sim \mathcal{N}(\boldsymbol{\mu}_{\mathcal{E}'},diag(\boldsymbol{\sigma}_{\mathcal{E}'}^2))$ denotes the $i$-th sample of the same distribution. 
Since direct sampling would block gradient backpropagation, we imitate the reparameterization process of the variational autoencoder and sample from the distribution with $\boldsymbol{\mu}_{\mathcal{E}'}$ and $\boldsymbol{\sigma}_{\mathcal{E}'}$:
\begin{equation}
\mathcal{F}_i = \boldsymbol{\mu}_{\mathcal{E}'}+\epsilon\odot\boldsymbol{\sigma}_{\mathcal{E}'},
\end{equation}
where $\epsilon\in\mathcal{N}(0,1)$ is a randomly sampled value.
The total loss is:
\begin{equation}
\mathcal{L}=\mathcal{L}_{cls}+\mathcal{L}_{exp}.
\end{equation}

\section{Experiments}
\subsection{Experimental Settings}
\quad \textit{\textbf{Dataset}}.
Our experiments utilize three benchmark datasets for multimodal sarcasm analysis. 
For the MSD task, we employ two benchmark datasets in multimodal sarcasm detection: MUStARD \cite{castro-etal-2019-towards} and MUStARD++ \cite{ray-etal-2022-multimodal}.
MUStARD contains 690 TV show situational dialogues, of which 345 are sarcastic and 345 are non-sarcastic.
Each dialogue is paired with video clips and corresponding transcripts. 
Following the initial split of the dataset, we evaluate both speaker-dependent and speaker-independent scenarios.
In the speaker-dependent setting, a five-fold cross-validation method is used; in the speaker-independent setting, the dialogues of \textit{Friends} drama are used as the test set, and others are the training set. 
MUStARD++ is a modification and extension of the MUStARD dataset, containing 1202 instances out of which 601 are sarcastic and 601 are non-sarcastic. 
We follow the dataset split of \citet{ray-etal-2022-multimodal}. 
For the MuSE task, we use the WITS dataset \cite{desai2022nice}, containing 2,240 sarcastic dialogues split into 1,792 training, 224 validation, and 224 test samples.
To enable explanation generation in MUStARD/MUStARD++, we leverage ChatGPT-4o to produce explanations.

\textit{\textbf{Baslines for MSD.}}
We select the following representative baseline models: \textbf{(1) Unimodal methods:} BERT \citep{Devlin_Chang_Lee_Toutanova_2019}, SMSD \citep{10.1145/3308558.3313735} and MIARN \citep{tay-etal-2018-reasoning}. 
\textbf{(2) Multimodal methods:} FiLM \citep{10.1007/978-3-030-92307-5_21}, SVM \citep{castro-etal-2019-towards}, A-MTL \citep{chauhan-etal-2020-sentiment}, QPM~\citep{Liu_Zhang_Li_Wang_Song_2021}, IWAN \citep{Wu2021ModelingIB}, HKT \citep{Hasan_Lee_Rahman_Zadeh_Mihalcea_Morency_Hoque_2022}, MO-Sarcation \citep{Tomar2023YourTS}, VyAnG-Net \citep{pandey2024vyang}, CESDN \citep{li2024attention}, MV-BART \citep{zhuang2024mv}, CCG-Net \citep{zhuang2025cross}, Collaborative Gating \citep{ray-etal-2022-multimodal}, AttFusion \citep{gao2024improving}, MEF \citep{shi2024multimodal}, %
and ViFi-CLIP~\citep{bhosale2023sarcasm}.

\textit{\textbf{Baslines for MuSE.}}
We compare the following sequence-to-sequence methods: RNN \citep{schuster1997bidirectional}, %
mBART \citep{liu2020multilingual}, BART \citep{lewis2019bart}, MAF \citep{kumar2022did}, MOSES \cite{kumar2023explaining}, EDGE \citep{ouyang2025sentiment} and CCG-Net \citep{zhuang2025cross}.

\textit{\textbf{Metrics.}}
For the MSD task, we use accuracy, weighted-precision, weighted-recall, and weighted-F1 as evaluation indicators \citep{Tomar2023YourTS}. 
For the MuSE task, we use the common metrics ROUGE-1/2/L \citep{Lin_2004}, BLEU-1/2/3/4 \citep{Papineni_Roukos_Ward_Zhu_2001}, METEOR \citep{Denkowski_Lavie_2014} and BERTscore \citep{bert-score}.

\textit{\textbf{Implementation details.}}
We use BART\footnote{\url{https://huggingface.co/facebook/bart-base}}, CLIP\footnote{\url{https://huggingface.co/openai/clip-vit-base-patch32}}, and CLAP\footnote{\url{https://huggingface.co/laion/clap-htsat-unfused}}~as encoders for the textual, visual, and auditory modalities respectively. 
We set the feature dimension $d=768$. ATF module is inserted at the 6-th layer of the BART encoder, with the causal intervention probability $\varepsilon$ set to 0.1. We employ the Adam optimizer. All experiments are conducted 5 times using a single RTX 4090 (24GB).

\subsection{Main Results}
\quad \textit{\textbf{MSD Performance.}}
Tables \ref{tab:main} and \ref{tab:mplus} present experimental results on MUStARD and MUStARD++ datasets. These results demonstrate the effectiveness of our proposed MuVaC from two perspectives: 
(1) MuVaC significantly outperforms strong baselines in speaker-dependent settings. 
Compared with MV-BART, it achieves improvements exceeding 6\% across all metrics. 
Besides, MuVaC demonstrates even stronger performance on MUStARD++, with an approximately 10\% F1 improvement compared to baselines. 
This highlights that MuVaC effectively leverages causal dependencies in explanations to predict more accurate answers, confirming the critical role of explanation information. 
(2) MuVaC has competitive but partially suboptimal performance in speaker-independent settings.
MuVaC outperforms VyAnG-Net in accuracy and recall but slightly lags in precision and F1. 
The result aligns with expectations: when tested on unseen speakers (e.g., "Chandler mocks Moderator" without prior exposure to either "Chandler" or "Moderator"), generative models fail to produce accurate explanations. %
This indirectly degrades prediction quality.
Despite this, MuVaC achieves higher recall and accuracy than all baselines, indicating its strong sensitivity to sarcastic dialogues.

\begin{table}[t]
\centering
\caption{Main results on MUStARD++ dataset.}
\label{tab:mplus}
\begin{tabular}{lccc}
\Xhline{1.1pt}
\textbf{Method}      & \textbf{Pre.(\%)} & \textbf{Rec.(\%)} & \textbf{F1(\%)} \\ \hline
SVM                  & 66.2              & 66.3              & 66.2            \\
Collaborative Gating & 69.8              & 69.5              & 69.5            \\
AttFusion            & 74.3              & 74.3              & 74.3            \\
VyAnG-Net            & 72.4              & 72.1              & 72.2            \\
MEF                  & 74.7              & 76.1              & 74.5            \\
Gaze                 & 73.2              & 73.2              & 73.3            \\
ViFi-CLIP            & 73.5              & 72.8              & 73.1            \\ \hline

\textbf{MuVaC}               & \textbf{81.2}                  & \textbf{85.3}                  &  \textbf{83.2}               \\ \Xhline{1.1pt}

\end{tabular}
\end{table}

\begin{figure*}[t]
    \centering
    \includegraphics[width=0.88\linewidth]{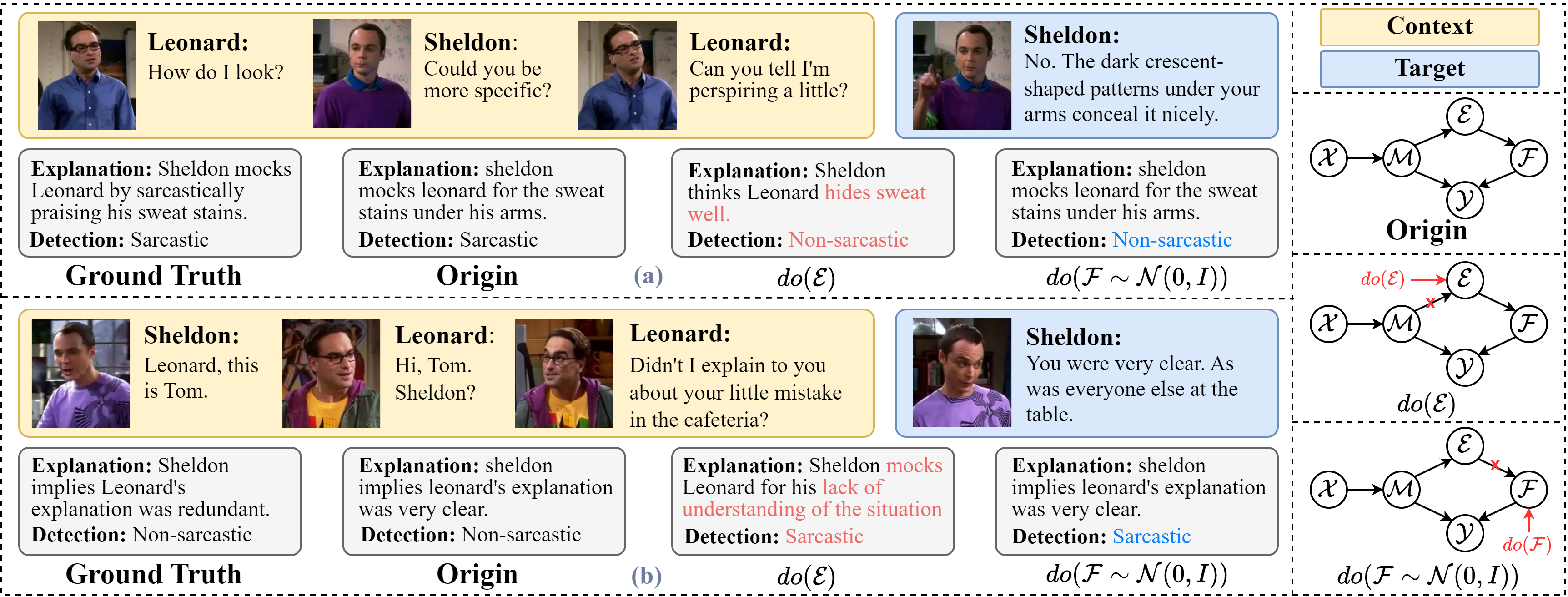}
    \caption{Causal results of manually intervening $\mathcal{E}$ and $\mathcal{F}$.}
    \label{fig:ce}
\end{figure*}

\textit{\textbf{MuSE Performance.}}
Table \ref{tab:exp} presents the results for sarcasm explanation generation on the WITS dataset. 
The findings show that MuVaC surpasses all baseline models on most evaluation metrics, including a ROUGE-1 score of 44.7 and a BLEU-1 score of 41.1.
Furthermore, MuVaC performs comparably with EDGE on semantic similarity metrics like METEOR and BERTscore (39.4 vs 39.9 and 79.3 vs 80.2, respectively).
This demonstrates that our causal modeling and ATF fusion mechanism can effectively integrate multimodal information, generating more well-structured explanations.

\begin{table}[t]
\centering

\caption{Main results on WITS dataset. (R1/2/L, B1/2/3/4, M and BS are the abbreviations of ROUGE-1/2/L, BLEU-1/2/3/4, METEOR, and BERTscore, respectively.)}
\label{tab:exp}
\setlength{\tabcolsep}{1mm}
\begin{tabular}{lccccccccc}
\Xhline{1.1pt}
\textbf{Model}       & \textbf{R1} & \textbf{R2} & \textbf{RL} & \textbf{B1} & \textbf{B2} & \textbf{B3} & \textbf{B4} & \textbf{M} & \textbf{BS} \\ \hline
RNN         & 29.2        & 7.9         & 27.6        & 22.1        & 8.2         & 4.8         & 2.9         & 18.5       & 73.2        \\
mBART       & 33.7        & 11.0        & 31.5        & 22.9        & 10.6        & 6.1         & 3.4         & 21.0       & 76.0        \\
BART        & 36.9        & 11.9        & 33.5        & 27.4        & 12.2        & 6.0         & 2.9         & 26.9       & 73.8        \\
MAF         & 39.7        & 17.1        & 37.4        & 33.2        & 18.7        & 12.4        & 8.6         & 30.4       & 77.7        \\
MOSES       & 42.2        & 20.4        & 39.7        & 34.9        & 21.5        & 15.5        & 11.5        & 32.4       & 77.8        \\
EDGE        & 44.4        & 21.8        & \textbf{42.4}        & 37.6        & 23.2        & 16.6        & 12.9        & \textbf{39.9}       & \textbf{80.2}        \\
CCG-Net     & 42.8        & 21.7        & 39.7        & 35.1        & 22.6        & 16.2        & 11.3        & 33.5       & 77.3        \\ \hline
\textbf{MuVaC}      & \textbf{44.7}        & \textbf{22.1}        & \textbf{42.4}        & \textbf{41.1}        & \textbf{26.1}        & \textbf{17.9}        & \textbf{13.2}        & 39.4       & 79.3        \\ \Xhline{1.1pt}

\end{tabular}
\end{table}

\textit{\textbf{Multi-task Performance.}}
To further demonstrate MuVaC's multi-task superiority, we conduct comparative experiments on the annotated MUStARD dataset, with results shown in Table \ref{tab:mtask}. 
All single-task baselines implement multi-task learning by sharing features from the last layer. 
The experiments reveal: 
(1) MuVaC achieves comprehensive superiority on both MSD and MuSE tasks;
(2) although CCG-Net maintains balanced performance across tasks, its conventional framework with implicit feature sharing exhibits performance bottlenecks;
(3) the MSD method MO-Sarcation shows significant performance degradation on MuSE, exposing cross-task transfer limitations in single-task optimization. 
These findings suggest that MuVaC improves the performance of two tasks simultaneously by explicitly modeling the causal relationship.

Based on evaluation metrics for all tasks, we confidently conclude that MuVaC serves as the current SOTA solution for sarcasm detection and explanation.
Its success stems from the integration of variational causal inference with explanation-aware modeling, effectively bridging the gap between detection and~explanation.

\begin{table}[t]
\centering
\caption{Multi-task results on MUStARD dataset in speaker dependent setting.}
\label{tab:mtask}
\begin{tabular}{lcccccc}
\Xhline{1.1pt}
\multirow{2}{*}{\textbf{Method}} & \multicolumn{3}{c}{\textbf{MSD}}              & \multicolumn{3}{c}{\textbf{MuSE}}             \\ \cline{2-7} 
                                   & Pre.          & Rec.          & F1            & R1            & B1            & M             \\ \hline
BART                               & 70.0          & 75.7          & 72.7          & 27.3          & 20.7          & 22.1          \\
MO-Sarcation                       & 79.7          & 79.7          & 79.7          & 22.8          & 20.1          & 17.0          \\
MOSES                              & 73.1          & 81.0          & 76.9          & 35.6          & 29.6          & 28.2          \\
MAF                                & 77.7 & 75.6          & 76.7          & 36.3          & 30.1          & 30.4          \\
CCG-Net                            & 79.1          & 79.1          & 79.0          & 36.6          & 31.5          & 30.6          \\ \hline
\textbf{MuVaC}                     & \textbf{86.8}          & \textbf{89.2} & \textbf{88.0} & \textbf{38.4} & \textbf{32.4} & \textbf{32.7} \\ \Xhline{1.1pt}

\end{tabular}
\end{table}

\subsection{Ablation Study}
To validate the effectiveness of each module in MuVaC, we design two sets of ablation experiments, targeting variational causal modeling and multimodal feature utilization components, respectively.
The results are shown in Table \ref{tab:ab_c}.

\textit{\textbf{Variational Causal Modeling.}} 
We design four variant models that remove key components: w/o $\mathcal{E}$ (removing explanation generator, using direct $\mathcal{M}\rightarrow\mathcal{Y}$ path), only $\mathcal{E}$ (solely using $\mathcal{M}\rightarrow\mathcal{E}\rightarrow\mathcal{F}\rightarrow\mathcal{Y}$ path),
w/o $\mathcal{L}_{exp}$ (removing explanation loss), and w/o  $\mathcal{L}_{cls}$ (removing classification loss). 
Based on the results, we have the following findings:
(1) Simply relying on the explaination feature $\mathcal{F}$ or the multimodal feature $\mathcal{M}$ shows suboptimal performance, indicating that sarcastic explanations can enhance detection capabilities but require joint optimization with original multimodal features.
This verifies that jointly modeling MuSE and MSD using a variational causal inference framework outperforms existing methods that employ separate task-specific heads for different tasks.
(2) Removing task-specific losses severely degrades corresponding performance, confirming $\mathcal{L}_{cls}$ and $\mathcal{L}_{exp}$'s effectiveness for detection and explanation generation.

\textit{\textbf{Multimodal Feature Utilization.} }
We design five variants for comparison. w/o \textit{F\&P} means discarding expression and posture features, w/o \textit{ATF} denotes direct concatenation of multimodal feature, and w/ MOSES means using multimodal fusion module in MOSES~\cite{kumar2023explaining}.
The results show that:
(1) w/o \textit{F\&P} leads to a 5.3\% drop in F1-score on MUStARD, which validates the critical importance of these auxiliary non-verbal cues for capturing sarcastic signals.
(2) The incremental performance gains observed with the step-by-step addition of +\textit{Align} and +\textit{Context} effectively demonstrate that each component of the ATF module is necessary for achieving robust multimodal feature fusion
(3) The ATF module can learn more robust features than other existing feature fusion modules like MOSES.
The results verify that MuVaC’s multimodal feature extraction and fusion modules can effectively support causal modeling with better results.

\begin{figure}[t]
    \centering
    \includegraphics[width=\linewidth]{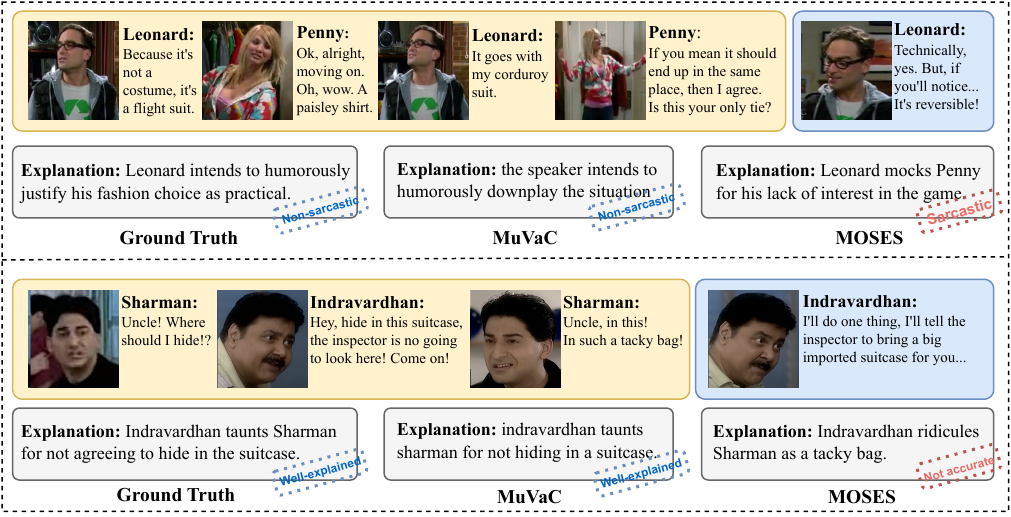}
    \caption{Case study on MUStARD and WITS datasets.}
    \label{fig:case}
\end{figure}

\begin{table}[t]
\centering
\caption{Ablation study of components (\%). }
\label{tab:ab_c}
\setlength{\tabcolsep}{1mm}
\begin{tabular}{lccccccccc}
\Xhline{1.1pt}
\multirow{2}{*}{\textbf{Variants}} & \multicolumn{3}{c}{\textbf{MUStARD}} & \multicolumn{3}{c}{\textbf{MUStARD++}} & \multicolumn{3}{c}{\textbf{WITS}} \\ \cline{2-10} 
                                   & Pre.    & Rec.   & F1   & Pre.     & Rec.    & F1    & R1          & B1          & M          \\ \hline
w/o $\mathcal{E}$                                &   84.2          &      86.5       & 85.3         &   76.4           &   85.3          &  80.6          &    -         &   -          &    -     \\ 
only $\mathcal{E}$                              &   79.5          &      83.8       &  81.6        &   75.0           &   83.6          &  79.1         &   42.4          &   38.4          &  37.2          \\ 
w/o $\mathcal{L}_{exp}$                                  &   83.8          &  83.8           & 83.8        &   76.4           &   80.3          &   78.4        &   0.1          &   1.2          &  1.1          \\
w/o  $\mathcal{L}_{cls}$                                &   62.0          &  48.6           &  54.5       &    65.5          &   62.3          &   63.9        &  42.6           &   38.4          &   36.3         \\ \hline
w/o \textit{F\&P}                                  &  81.6           &  83.8           &  82.7        &   78.9          &  \textbf{85.7}          & 82.2          &    42.3         &   38.2         &    35.9        \\   \hline
w/o \textit{ATF}                                  &  77.8           & 75.7           & 76.7         & 76.4             & 74.3            &  75.4        &   38.9          &   34.5           &   34.8       \\
\;+\textit{Align}                                &   82.9           & 78.4           & 80.6         &   77.8           &   80.0         &  78.9         &   40.6          &  37.1           & 35.4            \\
 \;\;+\textit{Context}                                &   85.3          &  78.4           &  81.7        &   80.5          &  82.8           &  81.7         &   41.9           & 37.9            &   35.8      \\
w/ MOSES                             &   82.8           & 82.6             &  82.6        &    79.3          &   79.3           &  79.3          &   42.1          &  37.8            &  36.4       \\
 
 \hline
\textbf{MuVaC}                              &    \textbf{86.8}         &    \textbf{89.2}         &  \textbf{88.0}        & \textbf{81.2}             &  85.3          &   \textbf{83.2}        & \textbf{44.7}           & \textbf{41.1}           & \textbf{39.4}        \\ \Xhline{1.1pt}
\end{tabular}
\end{table}

\subsection{Analysis of causal effects}
To investigate whether MuVaC learns the intended causal relationships, we analyze causal effects via manual causal interventions and conduct quantitative/qualitative evaluations.

\textit{\textbf{Quantitative Analysis.}}
We quantify causal effects by intervening on $\mathcal{E}$ and $\mathcal{F}$ during inferencing, and the results are shown in Table \ref{tab:ce}. 
$do(\mathcal{E})$ means using ground truth explanations $\mathcal{E}'$ instead of generated ones, blocking the $\mathcal{M}\rightarrow\mathcal{E}$ path; $do(\mathcal{F})$ replaces $\mathcal{F}$ with random Gaussian noise $\mathcal{F'}\sim \mathcal{N}(0,1)$, blocking the $\mathcal{E}\rightarrow\mathcal{F}$ path. 
The results show that when using ground truth $\mathcal{E}'$ directly, the indicators of the sarcasm detection approach nearly 100\%.
This demonstrates that MuVaC effectively models the causal relationship between MuSE and MSD, yielding consistent results.
In contrast, when intervening on $\mathcal{F}$, the performance becomes even worse than when no explanations are used at all, as shown in Table \ref{tab:ab_c}. 
In this case, the explanation and detection result are predicted independently, leading to inconsistent outcomes.

\begin{table}[t]
\centering
\caption{Analysis of causal effects (\%).}
\label{tab:ce}
\begin{tabular}{lccccccccc}
\Xhline{1.1pt}
\multirow{2}{*}{\textbf{Intervention}} & \multicolumn{3}{c}{\textbf{MUStARD}} & \multicolumn{3}{c}{\textbf{MUStARD++}} \\ \cline{2-7} 
                                   & Pre.    & Rec.   & F1   & Pre.     & Rec.    & F1          \\ \hline
$do(\mathcal{E})$                                  & 100.0           &  97.3           & 98.6        &  98.3           &   98.3           &  98.3         \\
$do(\mathcal{F})$                                & 76.3           &     78.4      &   77.3     &    75.0          &    78.7       & 76.8               \\ \hline
\textbf{MuVaC}                              &    86.8         &    89.2         &  88.0        & 81.2             &   85.3          &   83.2           \\ \Xhline{1.1pt}

\end{tabular}
\end{table}

\textit{\textbf{Qualitative Analysis.}}
We qualitatively analyze causal effects by observing the outcomes after manual intervention while keeping the input unchanged. 
As shown in Figure \ref{fig:ce}(a), MuVaC's original detection result and generated explanation highlight Sheldon's sarcastic response to Leonard.
When we manually replace the explanation with non-sarcastic content, the corresponding detection result changes to non-sarcastic, demonstrating the causal dependency of $\mathcal{Y}$ on $\mathcal{E}$. 
As depicted in Figure \ref{fig:ce}(b), when the explanation remains unchanged, replacing $\mathcal{F}$ with Gaussian noise leads to a non-sarcastic detection result. 
In this case, MuVaC cannot exploit the explanation information, resulting in independence between the explanation and the detection result.

Based on the above analysis, MuVaC establishes a reliable variational causal inference mechanism that ensures consistency between the model's explanation and detection result.

\subsection{Case Study}
To further verify the superiority of MuVaC, we design a case study on the MUStARD and WITS datasets. 
Since the WITS dataset is in Hindi, we provide its English translated version for consistent comparison, which is detailed in the Supplementary Materials.
Case 1 in Figure \ref{fig:case} shows the explanation information generated by MuVaC and other sarcastic explanation model MOSES on a non-sarcastic sample in MUStARD.
Only MuVaC correctly outputs the original meaning of the conversation, while MOSES believes that the conversation contains implicit sarcasm. 
This indicates that existing MuSE baselines struggle to grasp the true intentions of speakers in normal conversations. 
Case 2 presents a comparison of the explanations of sarcastic samples from the WITS dataset, where MuVaC outperforms the baselines in quality and is more helpful in reflecting sarcastic facts.
Therefore, MuVaC surpasses baselines in both correctness and quality.

\section{Conclusion}
In this paper, we introduce MuVaC, an innovative method that leverages variational causal inference to jointly integrate the MSD and MuSE tasks. 
This approach aims to simulate the human cognitive process of understanding sarcasm by explicitly modeling the causal relationship between sarcasm detection and explanation. 
To capture comprehensive sarcastic features, we not only incorporate multimodal cues often overlooked by existing models but also propose the ATF module for modality fusion. 
Experimental results demonstrate that MuVaC achieves state-of-the-art performance on both MSD and MuSE tasks, validating the causal relationship between tasks and offering a new perspective on multimodal sarcasm~understanding.

\begin{acks}
 This research is supported by the National Key R\&D Program of
 China (No. 2023YFC3303800).
 Xixun Lin is supported by the National Natural Science Foundation of China (No. 62402491) and the China Postdoctoral Science Foundation (No. 2025M771524). 
 Chuan Zhou is supported by the NSFC (No. 62472416). 
 Hao Peng is supported by the NSFC through grants U25B2029 and 62322202.
\end{acks}

\bibliographystyle{ACM-Reference-Format}
\bibliography{sample-base}
\appendix

\begin{table*}[t]
\centering
\caption{Comparison results with LLMs and MLLMs (\%). $^\dagger$ means fine-tuned on corresponding dataset.}
\label{tab:ab_llm}
\begin{tabular}{lcccccccccc}
\Xhline{1.1pt}
\multirow{2}{*}{\textbf{Methods}} & \multicolumn{3}{c}{\textbf{MuSTARD}} & \multicolumn{3}{c}{\textbf{MuSTARD++}} & \multicolumn{4}{c}{\textbf{WITS}} \\ \cline{2-11} 
                                  & Pre.       & Rec.       & F1         & Pre.        & Rec.        & F1         & R1     & B1     & M      & BS     \\ \hline
Llama 3.1                         & 53.8       & 83.1       & 65.3       & 50.3        & 99.3        & 66.8       & 20.5   & 18.8   & 18.3   & 72.9   \\
Qwen 2.5                          & 55.9       & 79.1       & 65.5       & 57.4        & 76.2        & 65.5       & 23.9   & 23.3   & 19.2   & 75.9   \\
MiniCPM-V 2.6                     & 57.2       & 62.6       & 59.8       & 52.4        & 73.7        & 61.3       & 20.4   & 17.9   & 18.8   & 71.8   \\
Qwen 2.5 VL                         & 61.9       & 41.4       & 49.6       & 63.2        & 46.2        & 53.4       & 21.8   & 20.7   & 18.5   & 74.0   \\
Kimi VL                            & 61.5       & 16.2       & 25.7       & 55.6        & 40.2        & 46.7       & 20.1   & 16.8   & 18.0   & 70.5   \\ \hline
Qwen 2.5 VL$^\dagger$                         &  67.5       & 70.3        &  68.9       &  74.4        &   55.8      &  63.8     & 24.7    & 26.0    & 23.2    & 77.9    \\ \hline
\textbf{MuVaC}                             & \textbf{86.8}       & \textbf{89.2}       & \textbf{88.0}       & \textbf{81.2}        & \textbf{85.3}       & \textbf{83.2}       & \textbf{44.7}           & \textbf{41.1}           & \textbf{39.4}  & \textbf{79.3}       \\ \Xhline{1.1pt}
\end{tabular}
\end{table*}

\section{More Experiments}

\subsection{Compared with LLMs and MLLMs}
To examine the capabilities of LLMs and MLLMs in sarcasm detection and explanation, we conduct comparative experiments, with results shown in Table \ref{tab:ab_llm}. 
Baseline models include Llama 3.1\footnote{\url{https://huggingface.co/meta-llama/Llama-3.1-8B-Instruct}}, Qwen 2.5\footnote{\url{https://huggingface.co/Qwen/Qwen2.5-7B-Instruct}}, 
Qwen2.5VL\footnote{\url{https://huggingface.co/Qwen/Qwen2.5-VL-7B-Instruct}}, KimiVL\footnote{\url{https://huggingface.co/moonshotai/Kimi-VL-A3B-Instruct}} and MiniCPM-V 2.6\footnote{\url{https://huggingface.co/openbmb/MiniCPM-V-2_6}}.
For sarcasm detection, we constrain the output to "yes" or "no". For sarcasm explanation, we adopt a 1-shot setting.

Experimental results indicate that LLMs and MLLMs, without targeted training and fine-tuning, struggle to achieve satisfactory performance on sarcasm detection and explanation, showing significant gaps compared with mainstream methods. Our analysis on the poor performance of LLMs/MLLMs is as follows. For the MSD task, we think the reason is that LLM/MLLMs cannot process the key tone information in dialogue scene. MLLMs mostly sample the video as images without considering the audio. For MuSE, the reason is that the form of the explanation generated by LLM/MLLMs is free. Even if the form is restricted in the instructions, it is difficult to perform well under token-level indicators like ROUGE. However, the MuSE performance of LLM/MLLM improves under semantic indicators like BERTscore.

\begin{table}[t]
\centering

\caption{Human evaluation (5$\times$224 samples, 0-5) and LLM-as-a-judge (0-5) on WITS.}
\label{tab:human}
\begin{tabular}{lcc}
\Xhline{1.1pt}
\textbf{Method} & \textbf{Human Evaluation} & \textbf{LLM-as-a-judge}\\ \hline
MOSES           & 2.51                      &3.24\\
MAF             & 2.54                      &3.28\\
EDGE            & 2.76                      &\textbf{3.36}\\
Qwen2.5VL    & \textbf{3.18}             &2.43\\
KimiVL      & 2.57                      &1.97\\ \hline
MuVaC           & \underline{2.82}          &\textbf{3.36}     \\ \Xhline{1.1pt}
\end{tabular}
\end{table}

\begin{table}[t]
\centering
\caption{Analysis of different sarcasm analysis task (\%).}
\label{tab:ab_task}
\begin{tabular}{lcc}
\Xhline{1.1pt}
  \textbf{Method}                      & \textbf{Matching} & \textbf{Ranking} \\ \hline
CLIP ViT-L               & 62.3     & 57.0    \\
T5-Large                & 59.6     & 61.8    \\
Qwen2.5VL-7b(zero-shot) & 50.1     & 52.1    \\
Qwen2.5VL-7b(5-shot)                & 53.7     & 53.8    \\
KimiVL-A3b(zero-shot)   & 58.6     & 55.2    \\
KimiVL-A3b(5-shot)                & 59.7     & 54.6    \\
ChatGPT 3.5(zero-shot)   & 50.4     & 52.8    \\
ChatGPT 3.5(5-shot)                & 63.8     & 55.6    \\\hline
\textbf{MuVaC}                   & \textbf{64.5}    & \textbf{65.5}   \\\Xhline{1.1pt}
\end{tabular}
\end{table}

\begin{table}[t]
\centering
\caption{Comparison results with different backbones (\%).}
\label{tab:backbone}
\resizebox{\linewidth}{!}{ 
\begin{tabular}{lccccccccc}
\Xhline{1.1pt}
\multirow{2}{*}{\textbf{Methods}} & \multicolumn{3}{c}{\textbf{MuSTARD}} & \multicolumn{3}{c}{\textbf{MuSTARD++}} & \multicolumn{3}{c}{\textbf{WITS}} \\ \cmidrule(l){2-10} 
                                   & Pre.       & Rec.       & F1         & Pre.        & Rec.        & F1         & R1        & B1        & M         \\ \hline

MuVaC(mBART)                       & 69.8       & 76.3       & 72.9       & 76.5        & 80.3        & 78.4       & 35.4      & 30.6      & 29.2      \\
MuVaC(BART)                        & 86.8       & 89.2       & 88.0       & 81.2        & 85.3        & 83.2       & 44.7           & 41.1           & 39.4      \\ 
\Xhline{1.1pt}
\end{tabular}
}
\end{table}

\subsection{Human evaluation and LLM as a judge}
Since the evaluation of sarcastic explanations is subjective, it is important to include human evaluation and LLM-as-a-judge evaluation. We recruit five volunteers to evaluate the sarcastic explanations on WITS (0-5, 5 is the highest), without knowing which model generated the explanations during the evaluation process. We use ChatGPT-4o as the evaluation model for LLM-as-a-judge.

The results are shown in Table \ref{tab:human}. From the results, we can conclude that MuVaC is competitive in human and LLM evaluation, second only to Qwen2.5VL-7b, a powerful MLLM.

\subsection{Analysis of different sarcasm analysis task}
MuVaC can handle new sarcastic forms after modification (from dialogue to image-text pair), as shown in Table \ref{tab:ab_task} for the results on HUB \cite{hessel-etal-2023-androids}. 
We select two new tasks, Sarcasm Matching and Ranking, as additional evaluations for MuVaC.
Matching requires models to select the finalist caption for the given cartoon from among distractors that were finalists, but for other contests. Quality ranking requires models to differentiate
a finalist from a non-finalist, both written for the given cartoon.
The prompt and settings are consistent with the original paper.
MuVaC performs better than zero-shot MLLM and fine-tuned CLIP. 
The results show that MuVaC can show adaptability in richer sarcasm understanding scenarios, demonstrating MuVaC's outstanding sarcasm understanding ability.

\subsection{Analysis of different backbone}
We conduct experiments to investigate the impact of different backbones, with results presented in Table \ref{tab:backbone}. The findings reveal that although mBART has stronger multilingual capabilities than BART, it does not demonstrate definitive performance improvements in sarcasm detection and explanation tasks.
At the same parameter scale, a larger semantic space does not learn more robust information. mBART's representation space is mixed. Its understanding of Hindi is inevitably interfered or averaged by its knowledge of dozens of other languages. This representation is good enough for a single task, but it may be noisy for joint tasks that require precise coordination. When MuVaC tries to optimize two objectives at the same time, this noise will be amplified.
In addition, existing work \cite{ouyang2025sentiment} also shows similar conclusions to ours.

\subsection{Analysis of different feature extractor}
To verify the compatibility of our method with different feature extraction approaches, we directly employed features extracted by MO-sarcasm and modified corresponding model components for comparison. As shown in Table \ref{tab:encoder}, MuVaC still demonstrates superior performance, achieving consistent improvements across different feature extraction methods.

\begin{table}[t]
\centering
\caption{Comparison results with features extracted by MO-sarcation \cite{Tomar2023YourTS} (\%).}
\label{tab:encoder}
\resizebox{\linewidth}{!}{ 
\begin{tabular}{lcccccccc}
\Xhline{1.1pt}
\multirow{2}{*}{\textbf{Methods}} & \multicolumn{4}{c}{\textbf{Speaker Dependent}} & \multicolumn{4}{c}{\textbf{Speaker Independent}} \\ \cmidrule(l){2-9} 
                                  & Acc.       & Pre.      & Rec.      & F1        & Acc.       & Pre.       & Rec.       & F1        \\ \hline
MO-Sarcation                      & 79.7       & 79.7      & 79.7      & 79.7      & 71.3       & 65.1       & 71.1       & 67.9      \\
CCG-Net                           & 78.2       & 79.4      & 72.9      & 76.1      & 64.0       & 57.1       & 63.1       & 60.0      \\
MuVaC                             & 82.6       & 81.1      & 81.1      & 81.1      & 76.4       & 69.7       & 78.9       & 74.0      \\ \Xhline{1.1pt}
\end{tabular}
}
\end{table}

\begin{figure}[tbp]
    \centering
    \includegraphics[width=\linewidth]{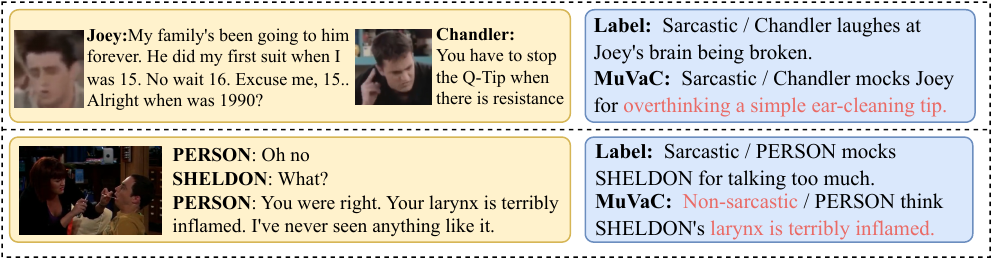}
    \caption{Failure cases.}
    \label{fig:failure}
\end{figure}

\begin{figure}[t]
    \centering
    \includegraphics[width=0.9\linewidth]{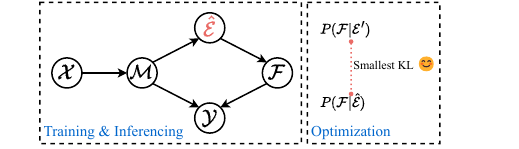}
    \caption{Variable causal inference pipline.}
    \label{fig:sup_vci}
\end{figure}

\subsection{Failure cases and limitations}
We provide two representative failure cases in Figure \ref{fig:failure}. 
The explanations in the failure cases tend to be superficial rather than having truly deep sarcastic connotations. 
The limitations mainly stem from the templated nature of the explanation generation, which may struggle to capture deeper sarcastic meanings. 
This may be caused by fixed training data formats and the backbone model's constraints. 
Future work could explore solutions using powerful LLM/MLLM.

\section{Related Derivations of Variational Causal Inference}

\subsection{Why $\log \boldsymbol{\sigma}^2$ instead of $\boldsymbol{\sigma}^2$ in Eq. 17?}
\;\;\; \textbf{Non-Negativity Constraint:} Variance must satisfy $\boldsymbol{\sigma}^2 > 0$. Directly predicting $\boldsymbol{\sigma}^2$ risks invalid negative outputs. By predicting log-variance ($\log \boldsymbol{\sigma}^2 \in \mathbb{R} $), we can ensure $\boldsymbol{\sigma}^2 = \exp(\log \boldsymbol{\sigma}^2)>0$.

\textbf{Numerical Stability:} Optimizing $\boldsymbol{\sigma}^2$ directly introduces instability near zero (e.g., exploding gradients). Using log-variance smooths gradient computation via the chain rule.

\subsection{KL Divergence Term in Eq.20}
\begin{proof}
    
The KL divergence term is the analytical solution under the normal distribution, which can be derived as follows:
\begin{equation}
\small
    \begin{aligned}
 & KL(q(\mathcal{F}|\mathcal{E}')\parallel p(\mathcal{F}|\hat{\mathcal{E}})) \\
 & =\int_{\mathcal{F}}q(\mathcal{F|\mathcal{E}'})\operatorname{log}\frac{q(\mathcal{F|\mathcal{E}'})}{p(\mathcal{F}|\hat{\mathcal{E}})}d\mathcal{F} \\
 &  =\int_{\mathcal{F}}\frac{1}{2}q(\mathcal{F}|\mathcal{E}')\Big[\operatorname{log}\frac{|\boldsymbol{\Sigma}_{\hat{\mathcal{E}}}|}{|\boldsymbol{\Sigma}_{\mathcal{E}^{'}}|}-(\mathcal{F}-\boldsymbol{\mu}_{\mathcal{E}^{'}})^{T}\boldsymbol{\Sigma}_{\mathcal{E}^{'}}^{-1}(\mathcal{F}-\boldsymbol{\mu}_{\mathcal{E}^{'}}) \\
 & \;+(\mathcal{F}-\boldsymbol{\mu}_{\hat{\mathcal{E}}})^T\boldsymbol{\Sigma}_{\hat{\mathcal{E}}}^{-1}(\mathcal{F}-\boldsymbol{\mu}_{\hat{\mathcal{E}}})\Big]d\mathcal{F} \\
  &= \frac{1}{2}\Big[\log\frac{|\boldsymbol{\Sigma}_{\hat{\mathcal{E}}}|}{|\boldsymbol{\Sigma}_{\mathcal{E}^{\prime}}|}-tr\big\{\mathbb{E}_{q}[(\mathcal{F}-\boldsymbol{\mu}_{\mathcal{E}^{\prime}})(\mathcal{F}-\boldsymbol{\mu}_{\mathcal{E}^{\prime}})^{T}]\boldsymbol{\Sigma}_{\mathcal{E}^{\prime}}^{-1}\big\} \\
  & \; +\mathbb{E}_q[(\mathcal{F}-\boldsymbol{\mu}_{\hat{\mathcal{E}}})^T\boldsymbol{\Sigma}_{E^*}^{-1}(\mathcal{F}-\boldsymbol{\mu}_{\hat{\mathcal{E}}})]\Big] \\
 & = \frac{1}{2}\Big[\log\frac{|\boldsymbol{\Sigma}_{\hat{\mathcal{E}}}|}{|\boldsymbol{\Sigma}_{\mathcal{E}'}|}-tr\{\boldsymbol{I}_{d_\mathcal{F}}\}+tr\{\boldsymbol{\Sigma}_{\hat{\mathcal{E}}}^{-1}\boldsymbol{\Sigma}_{\mathcal{E}^{\prime}}\}+\Delta\boldsymbol{\mu}^T\boldsymbol{\Sigma}_{\hat{\mathcal{E}}}^{-1}\Delta\boldsymbol{\mu}\Big] \\
 & = \frac{1}{2}\Big[\log\frac{|\boldsymbol{\Sigma}_{\hat{\mathcal{E}}}|}{|\boldsymbol{\Sigma}_{\mathcal{E}'}|}-d_\mathcal{F}+tr\{\boldsymbol{\Sigma}_{\hat{\mathcal{E}}}^{-1}\boldsymbol{\Sigma}_{\mathcal{E}^{\prime}}\}+\Delta\boldsymbol{\mu}^T\boldsymbol{\Sigma}_{\hat{\mathcal{E}}}^{-1}\Delta\boldsymbol{\mu}\Big],
\end{aligned}
\end{equation}

\end{proof}

\subsection{ELBO}

\begin{theorem}
    Suppose random vector $\mathcal{W} = (\mathcal{X},\mathcal{M},\mathcal{E},\mathcal{F},\mathcal{Y})$ follows a causal structure defined by the Bayesian network in Figure \ref{fig:sup_vci}. Then $\log p(\mathcal{Y}=\mathcal{Y'}|\mathcal{M})$ has the following variational lower bound:
    \begin{equation}
    \small
        \begin{aligned}
         &\log p(\mathcal{Y'}|\mathcal{M}) \\
         &\geq   \mathbb{E}_{q(\mathcal{F}|\mathcal{M})}[\operatorname{log}p(\mathcal{Y}'|\mathcal{M},\mathcal{F})]-KL(q(\mathcal{F}|\mathcal{M})\parallel p(\mathcal{F}|\mathcal{M})).
        \end{aligned}
    \end{equation}
    
\end{theorem}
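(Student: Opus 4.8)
The plan is to reproduce the standard evidence-lower-bound argument from variational inference, treating $\mathcal{F}$ as the latent variable and invoking the conditional-independence structure of the Bayesian network in Figure~\ref{fig:sup_vci} only where it is needed to fix the correct factorization. The entire derivation rests on marginalizing the target likelihood over $\mathcal{F}$, inserting the variational distribution $q(\mathcal{F}|\mathcal{M})$, and applying Jensen's inequality, exactly as in a variational autoencoder.

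First, I would write the marginal likelihood by introducing the latent variable, $p(\mathcal{Y}'|\mathcal{M}) = \int_{\mathcal{F}} p(\mathcal{Y}', \mathcal{F}|\mathcal{M})\, d\mathcal{F}$, and factor the integrand with the chain rule as $p(\mathcal{Y}'|\mathcal{M},\mathcal{F})\, p(\mathcal{F}|\mathcal{M})$. Here the graph structure is used to justify that, conditioned on its parents $\mathcal{M}$ and $\mathcal{F}$ along the collider $\mathcal{F}\rightarrow\mathcal{Y}\leftarrow\mathcal{M}$, the label $\mathcal{Y}$ is d-separated from $\mathcal{X}$ and $\mathcal{E}$, so no residual dependence on the marginalized variables survives. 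Next, I would multiply and divide the integrand by $q(\mathcal{F}|\mathcal{M})$ to rewrite the marginal as the expectation $\mathbb{E}_{q(\mathcal{F}|\mathcal{M})}\!\left[ p(\mathcal{Y}'|\mathcal{M},\mathcal{F})\, p(\mathcal{F}|\mathcal{M}) / q(\mathcal{F}|\mathcal{M}) \right]$, and then apply Jensen's inequality to the concave logarithm to obtain $\log p(\mathcal{Y}'|\mathcal{M}) \geq \mathbb{E}_{q(\mathcal{F}|\mathcal{M})}[\log p(\mathcal{Y}'|\mathcal{M},\mathcal{F}) + \log p(\mathcal{F}|\mathcal{M}) - \log q(\mathcal{F}|\mathcal{M})]$. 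Finally, splitting this expectation leaves the reconstruction term $\mathbb{E}_{q(\mathcal{F}|\mathcal{M})}[\log p(\mathcal{Y}'|\mathcal{M},\mathcal{F})]$ together with the two remaining terms, which collapse into $-KL(q(\mathcal{F}|\mathcal{M})\parallel p(\mathcal{F}|\mathcal{M}))$ by the definition of the KL divergence, yielding exactly the claimed bound.

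The routine parts are the Jensen step and the identification of the KL term. The step I expect to require the most care is the factorization justified by the Bayesian network: I must verify that the collider at $\mathcal{Y}$ and the front-door path $\mathcal{E}\rightarrow\mathcal{F}\rightarrow\mathcal{Y}$ indeed render $\mathcal{Y}$ conditionally independent of $\mathcal{X}$ and $\mathcal{E}$ given $\{\mathcal{M},\mathcal{F}\}$, so that $p(\mathcal{Y}'|\mathcal{M},\mathcal{F})$ is well defined and matches the quantity modeled by the detection head. Once this independence is established, the bound follows mechanically, and the tightness gap equals $KL(q(\mathcal{F}|\mathcal{M})\parallel p(\mathcal{F}|\mathcal{M},\mathcal{Y}'))$, which I would note to connect the bound back to the variational objective in Eq.~\eqref{eq:clo}.
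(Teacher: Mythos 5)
Your proposal is correct and follows essentially the same route as the paper's proof: marginalize over the latent $\mathcal{F}$, insert $q(\mathcal{F}|\mathcal{M})$, apply Jensen's inequality, factor $p(\mathcal{Y}',\mathcal{F}|\mathcal{M})=p(\mathcal{Y}'|\mathcal{F},\mathcal{M})\,p(\mathcal{F}|\mathcal{M})$ by the chain rule, and collapse the remaining terms into $-KL(q(\mathcal{F}|\mathcal{M})\parallel p(\mathcal{F}|\mathcal{M}))$; the only cosmetic difference is that you factor the joint before applying Jensen while the paper does so after, and your added remarks on d-separation and the tightness gap are harmless extras (the factorization itself needs only the chain rule, not the graph).
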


\begin{proof}
The ELBO derivation process used in this article is as follows:
\begin{equation}
\small
\begin{aligned}
    &\log p(\mathcal{Y}'|\mathcal{M})\\
    &=\log \int_\mathcal{F}p(\mathcal{Y',F|M})d\mathcal{F}\\
    &=\log \int_\mathcal{F}p(\mathcal{Y',F|M})\frac{q(\mathcal{F|M})}{q(\mathcal{F|M})}d\mathcal{F}\\
    &=\log \mathbb{E}_{q(\mathcal{F|M})}\frac{p(\mathcal{Y',F|M})}{q(\mathcal{F|M})} 
    \\
    &\geq\mathbb{E}_{q(\mathcal{F|M})}\log \frac{p(\mathcal{Y',F|M})}{q(\mathcal{F|M})}  \;(\text{Jensen's Inequality}) \\
    &=\mathbb{E}_{q(\mathcal{F|M})}\log \frac{p(\mathcal{Y'|F,M})p(\mathcal{F|M})}{q(\mathcal{F|M})} \\
    &=\mathbb{E}_{q(\mathcal{F}|\mathcal{M})}[\log p(\mathcal{Y}'|\mathcal{F},\mathcal{M})+\log p(\mathcal{F}|\mathcal{M})-\log q(\mathcal{F}|\mathcal{M}) ] \\
    &=\mathbb{E}_{q(\mathcal{F}|\mathcal{M})}[\operatorname{log}p(\mathcal{Y}'|\mathcal{F},\mathcal{M})]-KL(q(\mathcal{F}|\mathcal{M})\parallel p(\mathcal{F}|\mathcal{M}))
\end{aligned}
\end{equation}

\end{proof}

\begin{algorithm}[htbp]
\caption{Video Keyframe Sampling and Feature Extraction}
\label{alg:keyframe}
\textbf{Input}: 
\begin{itemize}
    \item Video set $\mathcal{V}$ 
    \item Metadata JSON $\mathcal{J}$ containing \texttt{"utterance"} field
    \item Time weight $\alpha = 0.1$, target frames $k=100$, candidate frames $c=500$.
\end{itemize}
\textbf{Output}: $\mathbb{R}^{k \times d}$ tensor per video

\SetAlgoLined
\DontPrintSemicolon
Load CLIP model $\mathcal{M}$\;
Load metadata $\mathcal{D} \gets \textsc{ReadJSON}(\mathcal{J})$\;

\ForEach{video $v \in \mathcal{V}$}{
    $n_{\text{total}} \gets \textsc{GetTotalFrames}(v)$\;
    
    \eIf{$n_{\text{total}} < c$}{
        $\mathcal{I}_{\text{cand}} \gets \textsc{Oversample}(0, n_{\text{total}}-1, c)$ 
    }{
        $\mathcal{I}_{\text{cand}} \gets \textsc{LinSpace}(0, n_{\text{total}}-1, c)$ 
    }
    
    $\mathcal{F}_{\text{clip}} \gets \mathcal{M}.\textsc{Encode}(\textsc{ExtractFrames}(v, \mathcal{I}_{\text{cand}}))$ \;
    $\mathbf{T} \gets [\frac{i}{n_{\text{total}}} \mid i \in \mathcal{I}_{\text{cand}}]$ 
    }
    $\tilde{\mathcal{F}} \gets \mathcal{F}_{\text{clip}} + \alpha \cdot \mathbf{T}$ 
    
    $\{\mathcal{C}_1,...,\mathcal{C}_k\} \gets \textsc{K-Means}(\tilde{\mathcal{F}}, k)$ 
    $\mathcal{S} \gets \emptyset$ 
    
    \ForEach{cluster $\mathcal{C}_i \in \{\mathcal{C}_1,...,\mathcal{C}_k\}$}{
        $s^* \gets \arg\min_{s \in \mathcal{C}_i} \|s - \mu_i\|_2$ 
        $\mathcal{S} \gets \mathcal{S} \cup \{\mathcal{I}_{\text{cand}}[s^*]\}$ \;
    }
    
    $\mathcal{F}_{\text{out}} \gets \textsc{SortByTime}(\mathcal{F}_{\text{clip}}[\mathcal{S}])$ 

\end{algorithm}
\section{Differences from MOSES}
Both MuVaC and MOSES’ Context-Aware Attention are based on MO-Sarcation, which is a recognized contextual attention structure. However, there are two main differences: \textbf{(1) Different inputs.} The input of MuVaC is aligned features, while MOSES directly extracts features.\textbf{ (2) Different procedures.} MuVaC is text-oriented, considering the contextual interaction of visual and auditory modalities, and has low overhead (only 4 attention calculations). MOSES tediously considers the combination of different modalities, which has greater overhead (9 calculations) and suboptimal results.

\begin{table}[t]
\centering
\caption{Hyper-parameter settings.}
\label{tab:lr}
\begin{tabular}{lccc}
\Xhline{1.1pt}
                           & \textbf{MUStARD} & \textbf{MUStARD++} & \textbf{WITS} \\ \hline
base lr                    & 5e-6             & 6e-5             & 8e-06         \\
new lr                     & 18e-5            & 12e-5            & 2e-05         \\
insertion layer               & 6-th             & 6-th             & 6-th          \\
$\varepsilon$ & 0.1              & 0.1              & 0.1           \\ \Xhline{1.1pt}
\end{tabular}
\end{table}

\section{Prompts}
\quad $\verb|Prompt for generating explanation|$
\textit{For non-sarcastic samples.} Please analyze the following dialogue and explain why there is no sarcasm based on the context and utterance. Provide your answer in one sentence with a maximum of 15 words. For example: [EXAMPLE] The dialogue is as follows: [DIALOGUE].

\textit{For sarcastic samples. }
Please analyze the following dialogue and explain why there is sarcasm based on the context and utterance. Provide your answer in one sentence, including the source of sarcasm, the action, and the target, with a maximum of 10 words. For example: [EXAMPLE] The dialogue is as follows: [DIALOGUE].

$\verb|Prompt for MSD task.|$
I will give you a paragraph of dialogue. Please check if there is any sarcastic meaning in this text. You can only answer yes or no. The dialogue is below: [DIALOGUE].

$\verb|Prompt for MuSE task.|$ 
Please analyze the following dialogue and explain why there is sarcasm based on the context and utterance. Provide your answer in one sentence, including the source of sarcasm, the action, and the target, with a maximum of 10 words. The dialogue is below: [DIALOGUE].

$\verb|Prompt for LLM as a judge.|$
Please evaluate how well the predicted explanation matches the reference explanation for sarcasm. Rate from 0-5 where 5 means perfect match and 0 means completely different.

\section{Implementation Details}
We utilize BART\footnote{\url{https://huggingface.co/facebook/bart-base}}, CLIP\footnote{\url{https://huggingface.co/openai/clip-vit-base-patch32}}, and CLAP\footnote{\url{https://huggingface.co/laion/clap-htsat-unfused}} as encoders for the textual, visual, and auditory modalities respectively. 
We set the feature dimension $d=768$. The ATF layer is inserted at the 6-th layer of the BART encoder, with the causal intervention probability $\varepsilon$ set to 0.1. We employ the Adam optimizer. The hyper-parameters are summarized in Table \ref{tab:lr}. All experiments are conducted 5 times using a single RTX 4090 (24GB). 
The video frame sample algorithm is shown in Algorithm \ref{alg:keyframe}.
For expression features, we first use YOLOv8 \cite{Jocher_YOLO_by_Ultralytics_2023} for face detection, and then obtain the final features through CLIP after cropping.
For posture features, we use the paddle detection library to track character movement and obtain the key frames of each person’s movements. Then, we convert the character’s posture into a silhouette image and finally obtain the features through CLIP.

\end{document}